\documentclass[11pt]{article}

\usepackage[final]{acl}

\usepackage{times}
\usepackage{latexsym}
\usepackage{amsmath} 
\usepackage{amsthm}
\usepackage{amssymb}
\usepackage[normalem]{ulem}
\useunder{\uline}{\ul}{}
\usepackage[inline]{enumitem}
\usepackage{enumitem}
\usepackage[T1]{fontenc}

\usepackage[utf8]{inputenc}

\usepackage{microtype}

\usepackage{inconsolata}

\usepackage{graphicx}
\usepackage{amsmath}
\usepackage{xcolor}
\usepackage[utf8]{inputenc}
\usepackage{booktabs} 
\usepackage{multirow} 
\usepackage{geometry} 
%
%

\newtheorem{proposition}{Proposition}
\newtheorem{lemma}{Lemma}
\newtheorem{definition}{Definition}

\newtheorem{remark}{Remark}
\newtheorem{corollary}{Corollary}
\usepackage{multirow} 
\title{Forget What’s Sensitive, Remember What Matters: Token-Level Differential Privacy in Memory Sculpting for Continual Learning}


\author{
 \textbf{Bihao Zhan\textsuperscript{1}},
 \textbf{Jie Zhou\textsuperscript{1}}\thanks{\ \ Corresponding author, jzhou@cs.ecnu.edu.cn.},
 \textbf{Junsong Li\textsuperscript{1}},
 \textbf{Yutao Yang\textsuperscript{1}},
 \textbf{Shilian Chen\textsuperscript{1}},
 \textbf{Qianjun Pan\textsuperscript{1}},\\
 \textbf{Xin Li\textsuperscript{2}},
 \textbf{Wen Wu\textsuperscript{1}},
 \textbf{Xingjiao Wu\textsuperscript{1}},
 \textbf{Qin Chen\textsuperscript{1}},
 \textbf{Hang Yan\textsuperscript{3}},
 \textbf{Liang Dou\textsuperscript{1}},
 \textbf{Liang He\textsuperscript{1}},
\\
 \textsuperscript{1}East China Normal University, \textsuperscript{2}Shanghai AI Laboratory,  \textsuperscript{3}Shanghai Qiji Zhifeng Co., Ltd. \\
}


\begin{document}
\maketitle
\begin{abstract} 
  Continual Learning (CL) models, while adept at sequential knowledge acquisition, face significant and often overlooked privacy challenges due to accumulating diverse information. Traditional privacy methods, like a uniform Differential Privacy (DP) budget, indiscriminately protect all data, leading to substantial model utility degradation and hindering CL deployment in privacy-sensitive areas. To overcome this, we propose a privacy-enhanced continual learning (\texttt{PeCL}) framework that forgets what's sensitive and remembers what matters. Our approach first introduces a token-level dynamic Differential Privacy strategy that adaptively allocates privacy budgets based on the semantic sensitivity of individual tokens. This ensures robust protection for private entities while minimizing noise injection for non-sensitive, general knowledge. Second, we integrate a privacy-guided memory sculpting module. This module leverages the sensitivity analysis from our dynamic DP mechanism to intelligently forget sensitive information from the model's memory and parameters, while explicitly preserving the task-invariant historical knowledge crucial for mitigating catastrophic forgetting. Extensive experiments show that \texttt{PeCL} achieves a superior balance between privacy preserving and model utility, outperforming baseline models by maintaining high accuracy on previous tasks while ensuring robust privacy.
\end{abstract}

\section{Introduction}
The rapidly expanding field of Continual Learning (CL) seeks to enable models to acquire and integrate new knowledge sequentially without forgetting what they've already learned, much like human intelligence \cite{yang2025recent,shi2024continual}. This capability is essential for real-world applications where data streams are continuous and dynamic, such as in personalized recommendation systems, autonomous driving, and healthcare diagnostics. However, as CL models, particularly powerful Large Language Models (LLMs), continuously assimilate diverse and evolving datasets, they inherently accumulate vast amounts of information. Much of this information can contain sensitive personal or proprietary data \cite{carlini2021extracting,charles2024fine}. This inherent characteristic poses significant and often overlooked privacy challenges, raising concerns about potential data leakage and misuse. 

Research shows that LLMs can inadvertently memorize sensitive or personally identifiable information (PII) during training or fine-tuning \cite{meng2025rr,kuo2025proactive}. Once embedded, such data become extremely difficult to audit, modify, or erase, posing significant legal and ethical challenges \cite{liu2025rethinking,wang2024machine}. Differential Privacy (DP) offers a mathematically rigorous solution by limiting the influence of individual training samples on model outputs \cite{dwork2006differential}. While promising, applying DP to LLMs, particularly in continual learning settings, poses significant challenges. Standard methods like DPSGD \cite{abadi2016deep} and federated approaches like DP-FedEXP \cite{takakura2025accelerating} typically apply noise globally, often degrading performance in diverse tasks. While PMixED \cite{flemings2024differentially} introduces an effective token-level DP baseline for next-token prediction, it is designed for static datasets and does not address catastrophic forgetting in continual learning (CL).
These limitations are exacerbated in batch-incremental continual learning, as data arrive sequentially in this setting, which most DP methods designed for static datasets and single-round training are ill-equipped to handle \cite{abadi2016deep}.

Specifically within CL, some works have investigated private data replay or private memory management to mitigate catastrophic forgetting while preserving privacy. 
In such scenarios, privacy risks can accumulate across training phases as each task introduces different types of sensitive information \cite{desai2021continual}. Recent alternative methods exploring synthetic data \cite{murtaza2023synthetic} or computer vision-focused subnetwork isolation like PALL \cite{ozdenizci2025privacy} are not directly applicable to token-level privacy in LLMs. Furthermore, as theoretically proven by Chourasia and Shah \cite{chourasia2023forget}, achieving selective forgetting without degrading overall model utility remains fundamentally challenging.
This capability is essential for trustworthy and privacy-compliant continual learning.
However, these methods often struggle to differentiate between the varying levels of sensitivity within data, leading to either insufficient protection for highly sensitive information or excessive noise injection for general knowledge. 

The primary challenges in achieving privacy-enhanced continual learning lie in three key areas. Firstly, effectively discerning and quantifying the sensitivity of different pieces of information at a fine-grained level (e.g., token-level in text) remains a significant hurdle. Without this nuanced understanding, a blanket privacy approach either over-protects non-sensitive data, leading to unnecessary utility loss, or under-protects truly sensitive data, resulting in privacy breaches. Secondly, balancing the often conflicting goals of privacy protection and knowledge retention (i.e., mitigating catastrophic forgetting) is a complex optimization problem. Injecting noise for privacy can inadvertently corrupt crucial historical knowledge, making it difficult for the model to recall past tasks. Finally, integrating a privacy mechanism seamlessly into the CL paradigm, where knowledge acquisition is sequential and dynamic, requires a novel architectural design that can adapt its privacy enforcement based on the evolving nature of the learned information.

To address these challenges, we propose a novel privacy-enhanced continual learning (\texttt{PeCL}) framework that champions the principle of ``forgetting what's sensitive and remembering what matters". Our approach is built upon two core innovations. First, we introduce a token-level dynamic Differential Privacy strategy that adaptively allocates privacy budgets based on the semantic sensitivity of individual tokens. This mechanism intelligently identifies and quantifies the privacy risk associated with each token, ensuring robust protection for private entities while minimizing noise injection for non-sensitive, general knowledge. Second, we integrate a privacy-guided memory sculpting module. This module leverages the fine-grained sensitivity analysis from our dynamic DP mechanism to intelligently and selectively forget sensitive information from the model's memory and parameters. Crucially, it simultaneously and explicitly preserves the task-invariant historical knowledge that is vital for mitigating catastrophic forgetting. We conduct extensive experiments across various continual learning benchmarks, demonstrating that our method achieves a superior balance between privacy protection and model utility. Our approach significantly outperforms baseline models by maintaining high accuracy on previous tasks while ensuring robust privacy guarantees.

In summary, our paper makes the following three main contributions:
\begin{itemize}[leftmargin=*, align=left]
    \item We propose a novel token-level dynamic Differential Privacy strategy that adaptively allocates privacy budgets based on the semantic sensitivity of individual tokens, enabling fine-grained and efficient privacy protection in continual learning.
    \item We introduce a privacy-guided memory sculpting module that intelligently forgets sensitive information from the model's memory while explicitly preserving task-invariant historical knowledge crucial for mitigating catastrophic forgetting.
    \item We empirically demonstrate that our framework achieves a superior balance between privacy protection and model utility, significantly outperforming state-of-the-art baselines in continual learning scenarios.
\end{itemize}

\section{Related Work}
\subsection{Privacy-Preserving in LLMs}

The growing deployment of LLMs in privacy-sensitive domains has intensified interest in privacy-preserving training techniques. A key line of research applies DP to large-scale models, with early methods such as DPSGD \cite{abadi2016deep} and recent advances like large-scale DP pretraining \cite{yu2021differentially}. Moreover, DP-MLM \cite{meisenbacher2024dp} introduces per-token protection via masked prediction and differential privacy constraints, improving the fidelity of downstream tasks while maintaining privacy. Others have explored attention-based perturbations to preserve privacy \cite{huang2022attention} or enforcing local differential privacy at inference time, such as segmentation denoising \cite{mai2023split} . PMixED \cite{flemings2024differentially} provides a robust token-level DP baseline for next-token prediction, but it is limited to static datasets rather than continual learning. Meanwhile, anti-learning-based techniques such as ForgetMeNot \cite{feldman2020does} and SISA \cite{bourtoule2021machine} attempt to remove data effects retroactively, but typically require retraining or model partitioning, which is not feasible in parameter-efficient continuous learning settings.
Unlike these approaches, we integrate token-level sensitivity estimation and dynamic perturbations directly into the continuous learning phase. Furthermore, we introduce selective anti-learning capabilities via memory shaping, enabling privacy-aware adaptation without expensive retraining.


\begin{figure*}[t]
\centering
\includegraphics[width=1.0 \textwidth]{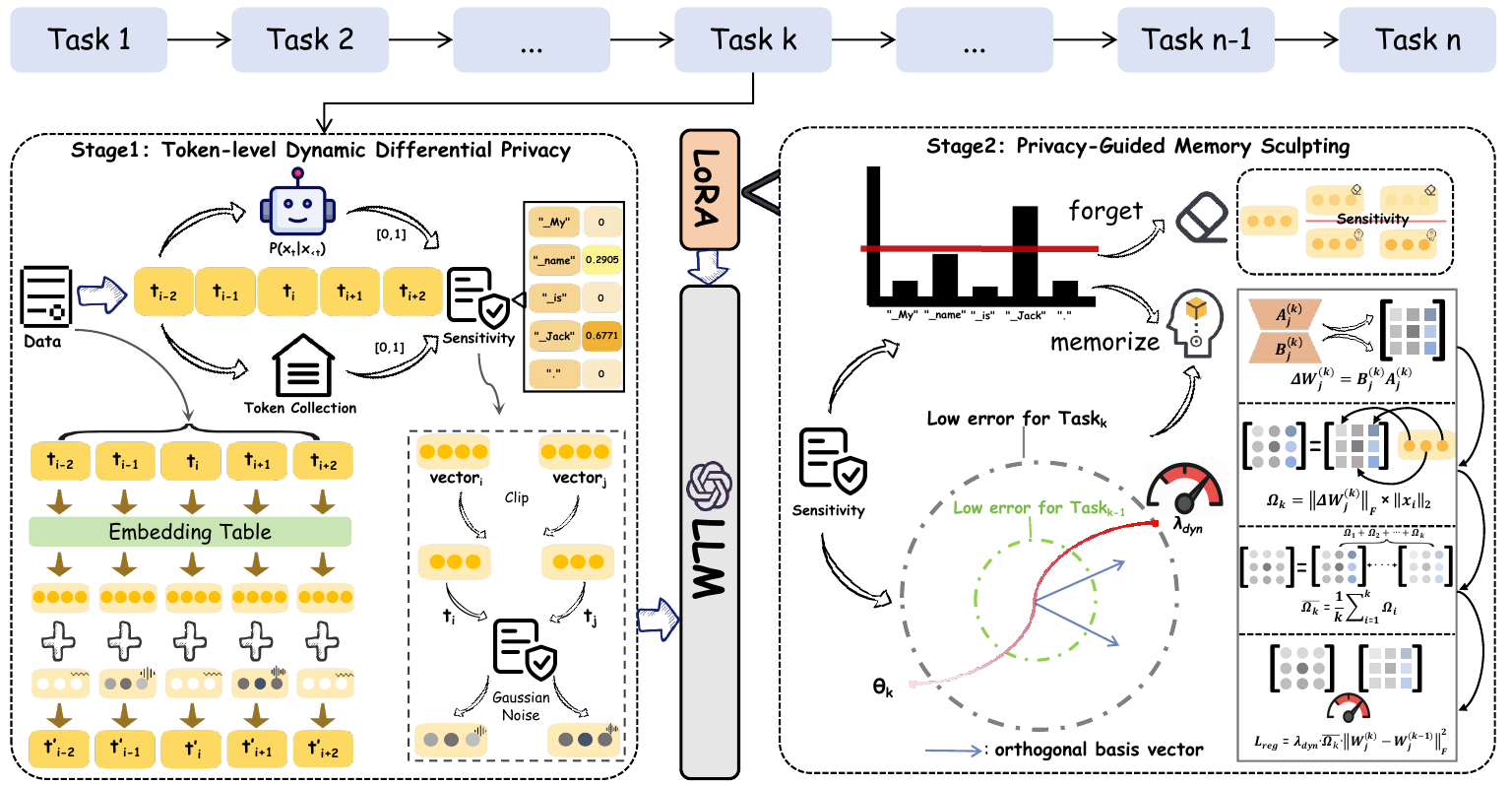} 
\caption{The framework of our \texttt{PeCL}, which is designed to balance privacy and utility by dynamically applying token-level differential privacy and intelligently sculpting model memory to retain crucial knowledge while forgetting sensitive information. The framework comprises two core modules: (a) Token-level Dynamic Differential Privacy, which adaptively injects noise into token embeddings based on their calculated sensitivity, and (b) Privacy-Guided Memory Sculpting, which leverages this sensitivity information to dynamically regularize parameters and implement privacy-aware unlearning, ensuring both catastrophic forgetting mitigation and robust privacy guarantees across sequential tasks.}
\label{fig1}
\end{figure*}

\subsection{Continual Learning of LLMs}
Continual learning (CL) for large language models (LLMs) aims to enable models to incrementally acquire new knowledge across multiple tasks while mitigating catastrophic forgetting \cite{yang2025recent}. Classical CL techniques, including Elastic Weight Consolidation (EWC) \cite{kirkpatrick2017overcoming}, Experience Replay (ER) \cite{rolnick2019experience}, and functional regularization \cite{gomez2022continually} , constrain parameter drift to preserve prior knowledge. However, these methods scale poorly with large models and often require full access to task-specific data or gradients.

Recent research has further explored more efficient and modular paradigms for continual learning. For instance, some works decouple parameter updates for different tasks by introducing orthogonal or low-rank subspaces, thereby enabling the effective learning and retention of new knowledge with almost no increase in model parameters \cite{wang2023orthogonal,yadav2023ties}. Other studies have focused on constructing dynamic, modular architectures that activate relevant model components for specific tasks at inference time through task routers or a combination of expert models \cite{jung2024pmoe,huai2025cl}. Such methods not only enhance the model's scalability and forward transfer capabilities but also offer new solutions for mitigating interference between tasks through explicit functional separation, allowing large models to adapt more flexibly to continuously changing data streams.
However, most CL methods ignore privacy or rely on unrealistic full data access. A pioneering exception is Desai et al. \cite{desai2021continual}, who introduced a sequence-level DP-CL algorithm using episodic memory. \texttt{PeCL} advances this paradigm by replacing coarse-grained sequence-level perturbations with fine-grained, token-level dynamic noise, and substituting standard episodic memory with privacy-guided memory sculpting to better mitigate catastrophic forgetting.

\section{Our Approach}
In this paper, we propose a privacy-preserving framework for continual learning (Figure~\ref{fig1}) that dynamically protects sensitive information at the token level through fine-grained sensitivity analysis, while actively sculpting the model’s memory to forget sensitive data and consolidate general knowledge. The framework comprises two key components: (1) Token-level Dynamic Differential Privacy, which adaptively allocates privacy budgets based on the sensitivity of each token, and (2) Privacy-Guided Memory Sculpting, a mechanism that integrates targeted forgetting of sensitive content with preservation of general knowledge, guided by the sensitivity signals from the first component.

\subsection{Task Formulation}
Formally, we consider a continual learning (CL) setup, where a sequence of tasks \(\mathcal{T}_{1}, \mathcal{T}_{2}, \ldots, \mathcal{T}_{N}\) arrives incrementally. Each task \(\mathcal{T}_{n} = \{(x_{i}, y_{i})\}\) consists of input–output pairs from distinct tasks. Let \(t = (t_{1}, \ldots, t_{n})\) denote a tokenized sequence. Instead of applying a uniform privacy budget across tokens or tasks, we assign a privacy-aware sensitivity score $\text{Score}(t_i) \in [0,1]$ to each token \(t_{i}\), based on both model behavior and corpus-level statistics. This score dynamically adjusts the local DP budget \(\epsilon_{i}\) and guides subsequent gradient perturbation and memory regularization.
The goal of this task is to learn new knowledge without forgetting the skills learned from previous tasks under the privacy protection setting.

\subsection{Token-level Dynamic Differential Privacy}
\label{sect:Token-level Dynamic Differential Privacy}
We propose a \textit{Token-level Dynamic Differential Privacy} (TDP) mechanism that adaptively calibrates privacy protection according to the sensitivity of individual tokens. Our approach dynamically adjusts the noise scale applied during training or inference based on a per-token sensitivity score, and we provide a theoretical guarantee that the resulting mechanism satisfies local differential privacy at the token level.

\paragraph{Token Sensitivity Calculation.}  
For each token \( t_i \) in an input sequence, we compute a privacy sensitivity score \( \text{Score}(t_i) \in [0,1] \), which reflects how likely the token is to contain private or task-specific information. This score is derived from a weighted combination of two complementary indicators—model uncertainty and contextual discriminativeness as follows:
\vspace{-2mm}
\begin{multline}
\label{eq:sensitivity_score}
\text{Score}(t_i) = \\
1 - \exp\!\Big( -\big( \alpha \cdot \text{Score}_1(t_i) + (1 - \alpha) \cdot \text{Score}_2(t_i) \big) \Big),
\end{multline}

where \( \alpha \in [0,1] \) is a tunable hyperparameter that balances the contributions of the two components. A larger \( \alpha \) places greater emphasis on model uncertainty (\( \text{Score}_1 \)), while a smaller \( \alpha \) prioritizes contextual informativeness (\( \text{Score}_2 \)).

The first component, \( \text{Score}_1(t_i) \), quantifies the model’s predictive uncertainty about token \( t_i \) given its preceding context \( t_{<i} \). Formally, it is defined as the negative log-likelihood under the model’s predictive distribution:
\begin{equation}
\label{eq:score1}
\text{Score}_1(t_i) = -\log P_\theta(t_i \mid t_{<i}),
\end{equation}
where \( P_\theta \) denotes the probability assigned by the model parameterized by \( \theta \). A higher value of \( \text{Score}_1(t_i) \) indicates that the token is either rare, highly context-dependent, or inconsistent with the model’s expectations—characteristics often associated with sensitive content.


The second component, \( \text{Score}_2(t_i) \), captures the token’s contextual discriminativeness across a set of tasks $\mathcal{T}_1, \dots, \mathcal{T}_N$. Inspired by the classic TF-IDF (Term Frequency-Inverse Document Frequency) algorithm \cite{salton1988term}, we adapt and extend traditional document-level statistics to a task-level formulation to measure token specificity. Unlike simple frequency-based measures, this adapted score emphasizes tokens that are strongly associated with specific tasks, as such tokens may inadvertently reveal private or task-identifying information. It is computed as:
\begin{equation}
\label{eq:score2}
\text{Score}_2(t_i) = \frac{1}{N} \sum_{n=1}^{N} p_n(t_i) \cdot \log \left( \frac{N}{1 + d(t_i)} \right),
\end{equation}
where \( p_n(t_i) \) denotes the normalized contextual salience of token \( t_i \) within task \( \mathcal{T}_n \), defined by
\begin{equation}
p_n(t_i) = \frac{f_n(t_i)}{f_n^{\max}}.
\end{equation}

Here, \( f_n(t_i) \) is the raw frequency of \( t_i \) across all samples in task \( \mathcal{T}_n \), and \( f_n^{\max} \) is the maximum token frequency observed in the same task, ensuring that \( p_n(t_i) \in [0,1] \).

To assess how broadly a token appears across tasks, we define the task-wise support count \( d(t_i) \) as the number of tasks in which \( t_i \) exhibits non-negligible relevance:
\begin{equation}
d(t_i) = \left| \left\{ n \in \{1, \ldots, N\} \,\middle|\, p_n(t_i) \geq \tau \right\} \right|,
\end{equation}
where \( \tau \in (0,1) \) (e.g., \( \tau = 0.2 \)) is a small threshold. Tokens with low \( d(t_i) \)—i.e., those that are salient in only a few tasks—are assigned higher discriminativeness scores, as they are more likely to carry task-specific or sensitive signals.

By integrating predictive uncertainty and cross-task discriminativeness, our sensitivity scoring mechanism enables fine-grained, context-aware privacy control. This, in turn, allows the TDP framework to dynamically modulate the magnitude of injected noise per token, ensuring stronger protection for high-sensitivity tokens while preserving utility for less sensitive ones.

\paragraph{Token-wise Dynamic Privacy Allocation.}
Based on the fused sensitivity score $\text{Score}(t_i)$, we define a token-wise dynamic privacy budget $\epsilon_i$ as:
\begin{equation}
\label{eq:epsilon_i}
\epsilon_i = \epsilon_{\text{lower}} + (\epsilon_{\text{upper}} - \epsilon_{\text{lower}}) \cdot (1 - \text{Score}(t_i))^2,
\end{equation}
where $\epsilon_{\text{lower}}$ and $\epsilon_{\text{upper}}$ are the minimum and maximum allowable privacy budgets. This formulation ensures that tokens with higher sensitivity (closer to 1) receive smaller privacy budgets (i.e., stronger protection), while less sensitive tokens (closer to 0) are granted larger $\epsilon_i$ to preserve utility. Following prior work on differential privacy \cite{abadi2016deep}, we set the token-level privacy budget range to $\epsilon_i \in [1, 10]$, reflecting a practical balance between model utility and privacy consistent with widely adopted DP regimes.

To enforce $(\epsilon_i, \delta)$-differential privacy at the token level, we inject calibrated Gaussian noise into the input embedding of each token. $\delta$ represents the probability that the privacy guarantee is violated and is typically set to a cryptographically small value. 
\textcolor{black}{Specifically, let $e_i \in \mathbb{R}^d$ denote the input embedding retrieved directly from the vocabulary lookup table before entering any layers. Injecting noise at this initial stage ensures local differential privacy at the source, preventing sensitive information leakage prior to contextual mixing in the self-attention mechanism.} Let $C$ be a predefined clipping norm bound.
The perturbed embedding $\tilde{e}_i$ is computed as:
\begin{equation}
\label{eq:perturbed_embedding}
\tilde{e}_i = \text{clip}(e_i, C) + \mathcal{N}(0, \sigma_i^2 I),
\end{equation}
where $\sigma_i = \frac{2 C \cdot \sqrt{2 \log(1.25/\delta)}}{\epsilon_i}$. Only tokens with a non-zero sensitivity score (and thus a defined $\epsilon_i$) receive noise. The $\text{clip}(\cdot, C)$ operation ensures that the $\ell_2$ norm of the embedding is bounded before noise injection, which is a necessary condition for satisfying DP guarantees. This mechanism allows our model to adaptively trade off between privacy and utility at a fine-grained level, directly guided by the semantic and contextual signals encoded in $\text{Score}(t_i)$. By tailoring noise strength per token, our approach achieves localized privacy protection, ensuring that each token embedding satisfies $(\epsilon_i, \delta)$-DP while maintaining overall task performance.

\begin{table*}[t] 
\centering
\small 
\setlength{\tabcolsep}{8pt} 
\begin{tabular}{lcccccc@{\hskip 8pt}ccc} 
\toprule 
\textbf{Method} & \textbf{Task1} & \textbf{Task2} & \textbf{Task3} & \textbf{Task4} & \textbf{Task5} & \textbf{Task6} & \textbf{BWT} & \textbf{Last} & \textbf{Avg} \\
\midrule 
SeqFT+DPSGD     & 0.141     & 0.293     & 0.633    & 0.345     & 0.149     & 0.493     & -0.116  & 0.342   & 0.403     \\
ER+DPSGD        & 0.438     & 0.463     & 0.536    & 0.443     & 0.152     & 0.573     & -0.136  & 0.434   & 0.486     \\
EWC+DPSGD       & 0.444     & 0.284     & 0.471    & 0.302     & 0.160     & 0.481     & -0.137  & 0.358    & 0.393     \\
GEM+DPSGD       & 0.337     & 0.333     & 0.394    & 0.339     & 0.179     & 0.356     & -0.157  & 0.323   & 0.387     \\
OLora+DPSGD     & 0.297     & 0.338     & 0.438     & 0.322    & 0.164     & 0.405     &-0.166   & 0.327   & 0.395     \\
SeqFT+FN        & 0.101     & 0.204     & 0.145    & 0.200       & 0.135     & 0.331     & -0.306   & 0.186    & 0.351     \\
ER+FN           & \textbf{0.456}      & 0.420      & 0.673    & 0.371     & 0.029     & 0.309    & -0.121  & 0.376  & 0.492     \\
EWC+FN          & 0.260      & 0.231     & 0.208    & 0.236     & 0.261     & 0.183     & -0.248  & 0.230   & 0.397     \\
GEM+FN          & 0.365     & 0.253     & 0.519    & 0.276     & 0.285     & 0.210     & -0.193   & 0.318  & 0.428     \\
OLora+FN        & 0.329     & 0.192     & 0.240    & 0.177     & 0.106     & 0.165     &-0.294   & 0.202   & 0.363     \\ 
\textcolor{black}{PMixED}       & 0.431     & 0.110     & 0.737    & 0.338     & 0.376     & 0.419     &-0.118   & 0.402   & 0.463    \\ \midrule
\texttt{\textbf{PeCL}} (Ours)             & 0.436     & \textbf{0.521}     & \textbf{0.769}   & \textbf{0.444}     & \textbf{0.456}     & \textbf{0.714}     & \textbf{-0.093}  & \textbf{0.573}  & \textbf{0.535}    \\ \midrule
MTL+DPSGD (Upper Bound)       & 0.456     & 0.538     & 0.780    & 0.482     & 0.197     & 0.334     & -     & 0.464    & - \\ 
MTL+FN (Upper Bound)         & 0.405     & 0.463     & 0.808    & 0.448     & 0.503     & 0.305     & -     & 0.489     & -\\
\bottomrule 
\end{tabular}
\caption{Performance comparison of different methods.} 
\label{Main_Result} 
\end{table*}

\subsection{Privacy-Guided Memory Sculpting}
While token-level dynamic differential privacy provides localized privacy at the input embedding level, it does not inherently prevent sensitive information from being memorized within the model parameters over time. To address this, we introduce Privacy-guided Memory Sculpting (PMS), which reshapes parameter updates by integrating token-level privacy sensitivity into the learning dynamics. Our method comprises two complementary components: Memory Regularization and Privacy-Aware Unlearning.

\paragraph{Memory Regularization.} For each incoming task $\mathcal{T}_k$, we first compute the parameter increment for LoRA \cite{hu2022lora} adapter $j$ as:
\begin{equation}
\label{eq:delta_W}
\Delta W_j^{(k)} = B_j^{(k)} A_j^{(k)}.
\end{equation}
where $\Delta W_j^{(k)}$ means the learned update specific to task $\mathcal{T}_k$. Next, inspired by \citet{sun2023simple},we compute a task-specific importance score, $\Omega_k$, by multiplying the Frobenius norm of $\Delta W_j^{(k)}$ with the $L_2$ norm of the input activations $x$ corresponding to task $\mathcal{T}_k$:
\begin{equation}
\label{eq:omega_k}
\Omega_k = \left\| \Delta W_j^{(k)} \right\|_F \times \left\|x\right\|_2.
\end{equation}

We then accumulate the importance across tasks using an online average, which serves as a measure of the cumulative importance of previously learned knowledge:
\begin{equation}
\label{eq:omega_bar_k}
\bar{\Omega}_k = \frac{1}{k} \sum_{i=1}^{k} \Omega_i.
\end{equation}

Finally, inspired by the classic Elastic Weight Consolidation (EWC) \cite{kirkpatrick2017overcoming} approach, we define the stability-aligned regularization loss. While EWC typically relies on the Fisher Information Matrix, we adapt this regularization principle directly to the LoRA parameter space to stabilize parameters crucial for retaining historical knowledge:
\begin{equation}
\label{eq:L_reg}
\mathcal{L}_{\text{reg}} = \lambda_{\text{dyn}} \cdot \bar{\Omega}_k \cdot \left\| W_j^{(k)}-W_j^{(k-1)} \right\|_F^2,
\end{equation}
where $W_j^{(k)}$ denotes the current LoRA parameters for task $k$, and $\lambda_{\text{dyn}}$ is a dynamic regularization weight.

To introduce privacy-aware adaptivity into this regularization, we modulate the regularization strength dynamically according to the average token sensitivity score $\bar{s}_k$ of task $\mathcal{T}_k$. Specifically, we define the dynamic coefficient $\lambda_{\text{dyn}}$ as:
\begin{equation}
\label{eq:lambda_dyn}
\lambda_{\text{dyn}} = \lambda_{\text{max}} \cdot (1 - \bar{s}_k) + \lambda_{\text{min}} \cdot \bar{s}_k,
\end{equation}
where $\lambda_{\text{max}}$ and $\lambda_{\text{min}}$ represent the maximum and minimum levels of regularization strength, respectively. This formulation ensures that tasks involving less sensitive content (i.e., lower $\bar{s}_k$) are regularized more strictly to retain prior knowledge, while more privacy-sensitive tasks are allowed greater flexibility for adaptation and forgetting. The average sensitivity $\bar{s}_k$ for task $\mathcal{T}_k$ is computed as the mean of $\text{Score}(t_i)$ over all tokens $t_i$ in the task’s data.


\paragraph{Privacy-Aware Unlearning.} To further reshape learning dynamics by directly addressing sensitive information, we define a second loss term called Privacy-Aware Unlearning. This term directly adjusts token-level gradient contributions using their fused sensitivity scores. The formulation is:
\vspace{-2mm}
\begin{multline}
\label{eq:L_unlearn}
\mathcal{L}_{\text{unlearn}} = \\
\frac{1}{M} \sum_{i=1}^{M} (\text{Score}(t_i) - \theta) \cdot \ell(t_i) \cdot \mathbb{I}(\text{Score}(t_i) > \theta),
\end{multline}
where $\ell(t_i)$ is the cross-entropy loss of token $t_i$, $\theta$ is a predefined sensitivity threshold, and $M$ is the number of tokens in the sequence. $\mathbb{I}(\cdot)$ is the indicator function, meaning this term contributes only for tokens whose sensitivity $\text{Score}(t_i)$ exceeds the threshold $\theta$. This formulation specifically targets high-sensitivity tokens: by multiplying their loss contribution by $(\text{Score}(t_i) - \theta)$, we encourage the model to unlearn or softly suppress the reinforcement of information associated with these tokens.

The overall training objective combines task-specific prediction, memory regularization, and unlearning:
\begin{equation}
\label{eq:L_total}
\mathcal{L}_{\text{total}} = \mathcal{L}_{\text{task}} + \mathcal{L}_{\text{reg}} + \lambda_{\text{unlearn}} \cdot \mathcal{L}_{\text{unlearn}},
\end{equation}
where $\mathcal{L}_{\text{task}}$ is the standard task-specific loss (e.g., cross-entropy loss for classification or language modeling), and $\lambda_{\text{unlearn}}$ is a hyperparameter controlling the strength of the privacy-aware unlearning component.

This unified objective enables continual learning that dynamically aligns parameter retention and forgetting with token-level privacy sensitivity, ensuring both task stability and robust privacy guarantees.

\textcolor{black}{Together, TDP and PMS form a \textbf{Dual-Layer Defense}. While PMS triggers parameter-level unlearning for tokens above the threshold $\theta$, TDP protects ``near-threshold'' tokens. Since TDP allocates noise continuously, tokens below $\theta$ with non-zero sensitivity still receive proportional perturbation. This ``soft suppression'' ensures robust privacy at the input level even without explicit memory sculpting.}


\section{Experiments}
\subsection{Experiments Setups}
\paragraph{Datasets.} To evaluate our privacy-enhanced continual learning framework, we construct a multi-task dataset covering six distinct domains with varying privacy sensitivities. In the absence of any public benchmark for privacy-preserving continual learning, we select six tasks: FOMC \cite{shah2023trillion}, Yelp \cite{asghar2016yelp}, AGNews \cite{zhang2015character}, Amazon\footnote{\tiny https://www.kaggle.com/datasets/kritanjalijain/amazon-reviews}, Mentill\footnote{\tiny https://huggingface.co/datasets/mavinsao/reddit-mental-illnes}, and Yahoo\footnote{\tiny https://www.kaggle.com/datasets/bhavikardeshna/yahoo-email-classification}, and sample 3,000 examples for each task. \textcolor{black}{All tasks are formulated as text classification. These datasets are selected because they originate from real-world user reviews and Q\&A platforms, naturally containing unstructured and potentially sensitive personal information, making them ideal benchmarks for evaluating privacy-preserving continual learning.}

\paragraph{Evaluation Metrics.} \textbf{Evaluation Metrics.} We employ three widely used continual learning metrics: Average Accuracy (Avg), Last Accuracy (Last), and Backward Transfer (BWT) \cite{chaudhry2018riemannian,lopez2017gradient}. Avg measures the mean accuracy across all tasks at each training step, Last reflects the final average accuracy after sequentially learning all tasks, and BWT quantifies the effect of learning new tasks on previously acquired knowledge (i.e., catastrophic forgetting). Formal mathematical definitions are detailed in Appendix~\ref{appendix:evaluation_metrics}.


\paragraph{Baselines.}  To evaluate the effectiveness of our approach, we compare against several representative baselines. For continual learning, we consider ER \cite{rolnick2019experience}, EWC \cite{kirkpatrick2017overcoming}, GEM \cite{lopez2017gradient}, and O-LoRA \cite{wang2023orthogonal}, covering replay-based, regularization-based, and adapter-based strategies. For privacy protection, we include DPSGD \cite{abadi2016deep} and a frozen embedding with additive noise (FN) as baselines. Additionally, we report two control setups: Multitask training (MTL) is always regarded as the upper bound, which simultaneously learns all tasks (MTL), and Sequential Finetuning (SeqFT), which sequentially learns each task. \textcolor{black}{Furthermore, to provide a comprehensive comparison in fine-grained privacy, we also include PMixED \cite{flemings2024differentially}, a recent  token-level privacy baseline.}

\subsection{Main Results}
Table~\ref{Main_Result} presents the evaluation of various CL methods under differential privacy constraints across six tasks, measured by Backward Transfer (BWT), Last accuracy, and Average accuracy (Avg). The results yield three key insights:

First, \texttt{PeCL} achieves the best overall performance across all three metrics, demonstrating the efficacy of combining token-level dynamic privacy with memory sculpting. Notably, it surpasses the recent token-level DP baseline, PMixED, by achieving a superior balance between privacy protection and knowledge retention. As expected, SeqFT variants exhibit the most severe catastrophic forgetting, underscoring the necessity of dedicated CL mechanisms under privacy constraints.

Second, \texttt{PeCL} consistently outperforms  baselines like ER+DPSGD and ER+FN, highlighting the clear advantage of our adaptive noise allocation over static noise injection strategies. Furthermore, traditional regularization-based methods (e.g., EWC and GEM) and adapter-based approaches (like OLora) show limited effectiveness. This indicates that fixed parameter constraints or standard adapters struggle to stabilize learning under the disruptive effects of rigorous privacy noise.

Third, \texttt{PeCL} surprisingly surpasses the multitask learning baseline (MTL+DPSGD) in both Last and Avg accuracy. Achieving this in a realistic, task-incremental setting rather than a centralized joint-training scenario strongly underscores the robustness and practical applicability of our framework in privacy-sensitive continual learning environments.

\begin{table*}[t] 
\vspace{-2mm}
\centering
\small 
\setlength{\tabcolsep}{8pt} 
\begin{tabular}{lcccccc@{\hskip 8pt}ccc} 
\toprule 
Method & Task1 & Task2 & Task3 & Task4 & Task5 & Task6 & BWT & Last & Avg \\
\midrule 
\texttt{\textbf{PeCL}} (Ours)    & 0.436     & \textbf{0.521}     & \textbf{0.869}     & \textbf{0.444}     & \textbf{0.456}     & \textbf{0.714}    & \textbf{-0.093}  &\textbf{0.573}  & \textbf{0.535}     \\ \midrule
-w/o TDP   & \textbf{0.445}      & 0.313     & 0.422    & 0.297   &0.149  & 0.146    & -0.155  & 0.295   & 0.473    \\
-w/o PMS   & 0.294     & 0.224     & 0.349     & 0.219     & 0.262     & 0.710     & -0.430  & 0.312   & 0.465     \\
~~~-w/o MemReg        & 0.385     & 0.387     & 0.730      & 0.371     & 0.434     & 0.725    & -0.212  & 0.505  & 0.496     \\
~~~-w/o Unlearning            & 0.407     & 0.499     & 0.857     & 0.421     & 0.350      & 0.727    & -0.133  & 0.544  & 0.524     \\
\bottomrule 
\end{tabular}
\caption{The results of ablation studies.} 
\label{ablation_studies} 
\end{table*}


\subsection{Ablation Studies}
To assess the effectiveness of each component in our framework, we conduct an ablation study as summarized in Table~\ref{ablation_studies}. We focus on two major modules: Token-level Differential Privacy (TDP) and Privacy-Guided Memory Sculpting (PMS), as well as its two internal mechanisms: Memory Regularization (MemReg) and Privacy-Aware Unlearning. TDP serves as the foundation of our privacy design: when replaced with coarser sentence-level privacy (w/o TDP), both performance and forgetting degrade significantly, indicating that fine-grained sensitivity modulation is essential for balancing utility and protection. PMS further enhances long-term performance, as removing it entirely (w/o PMS) leads to the largest drop in average accuracy and a sharp increase in forgetting. Within PMS, Memory Regularization and Unlearning target complementary aspects: disabling MemReg results in weakened task stability, while removing the unlearning component slightly reduces accuracy but notably increases forgetting. In summary, TDP enables precise privacy modulation, MemReg stabilizes knowledge across tasks, and Unlearning effectively removes sensitive traces, together forming a cohesive and robust privacy-preserving continual learning framework.

\begin{figure}
\centering
\includegraphics[width=1 \columnwidth]{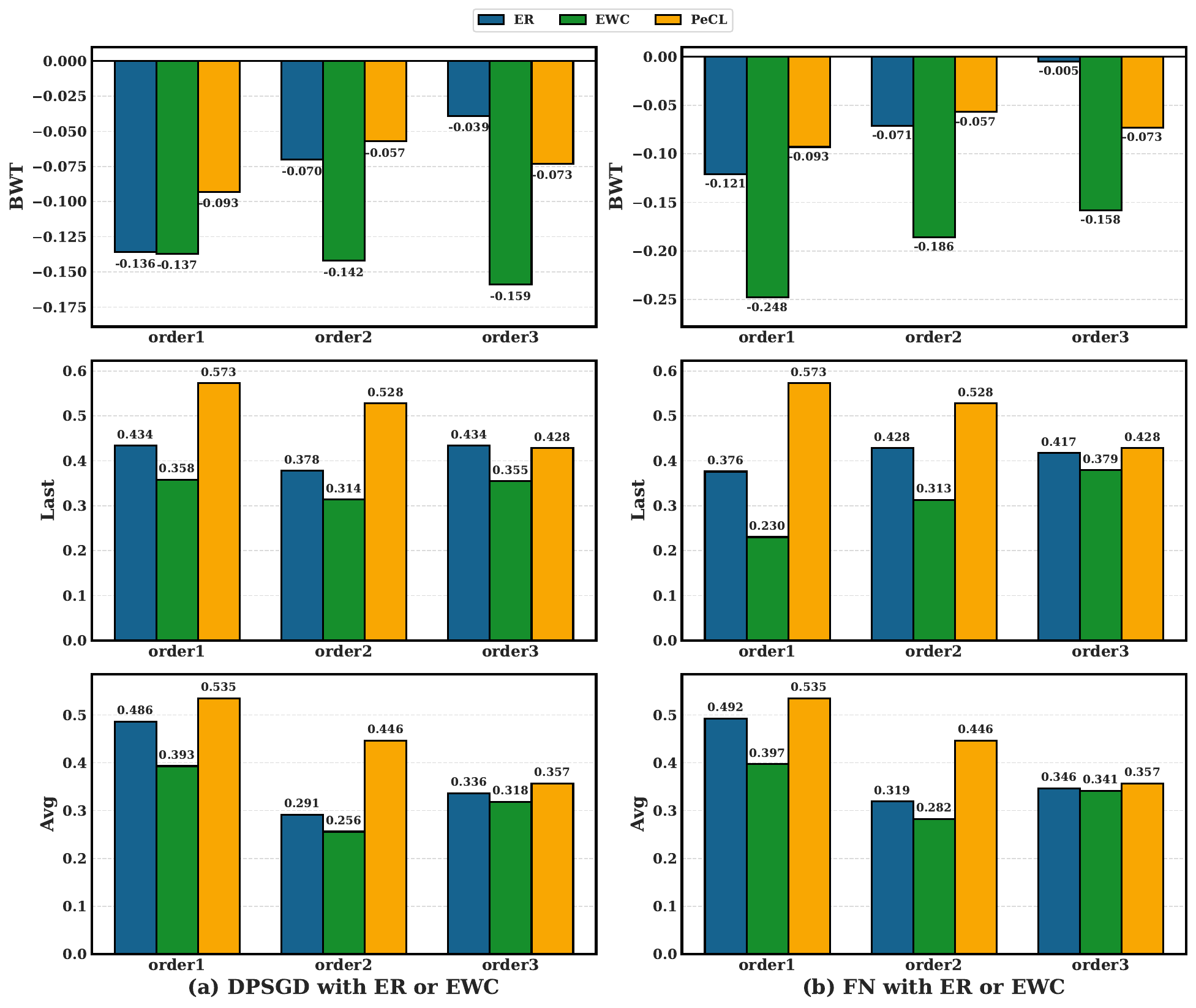} 
\caption{Results of different task order.}
\label{fig2}
\end{figure}


\subsection{Further Analysis}
In this section, we conduct extensive sensitivity analyses to gain deeper insights into the proposed method. We primarily focus on the Influence of Task Order to evaluate the model's robustness against varying task arrival sequences, as well as the Impact of Hyperparameter $\alpha$ to understand the balance between uncertainty and context in privacy sensitivity scoring. For additional analyses, including the Impact of Hyperparameter $\lambda_{\mathrm{unlearn}}$ and Impact of Hyperparameter $\alpha$ and $\theta$, please refer to the Appendix~\ref{Anlysis of Hyperparameter}.
\paragraph{Influence of Task Order.} We consider three task orders: a natural progression (order1: 1 $\rightarrow$ 2 $\rightarrow$ ... $\rightarrow$ 6), the reversed sequence (order2: 6 $\rightarrow$ 5 $\rightarrow$... $\rightarrow$ 1), and a deliberately shuffled permutation (order3: 4 $\rightarrow$ 5 $\rightarrow$ 1 $\rightarrow$ 3 $\rightarrow$ 6 $\rightarrow$ 2). Figure~\ref{fig2} illustrates how different task arrival sequences affect model performance. Across all task orders, our \texttt{PeCL} consistently outperforms baselines such as ER+DPSGD and EWC+DPSGD in terms of both accuracy and forgetting. Notably, \texttt{PeCL} exhibits strong stability in order1 and order2, maintaining high final accuracy and low forgetting. While there is a slight performance drop in order3, \texttt{PeCL} still remains competitive and significantly more robust than other methods, whose results vary more dramatically across different orders. These results demonstrate that \texttt{PeCL} can adapt well to varying task sequences.


\section{Conclusions and Future Work}
In this paper, we present \texttt{PeCL}, a novel privacy-enhanced continual learning framework designed to address the critical challenges of data privacy and catastrophic forgetting in sequential learning scenarios, particularly with LLMs. Our approach tackles this challenge through two key innovations: a token-level dynamic Differential Privacy strategy and a privacy-guided memory sculpting module.
Extensive experiments demonstrate that \texttt{PeCL} achieves superior performance compared to state-of-the-art baselines, delivering higher average accuracy, better retention of past knowledge, and stronger privacy guarantees. Ablation studies confirm the necessity of the main components, while further analysis shows the robustness of our method across varying task orders and hyperparameter settings. 
The results highlight the importance of integrating sensitivity-aware, adaptive privacy mechanisms into continual learning systems.

\section*{Limitations}


\textcolor{black}{While PeCL establishes a strong foundation, several challenges remain. First, the privacy budget $\epsilon$ accumulates in long-context scenarios, potentially exceeding strict bounds. Future work will explore Rényi Differential Privacy (RDP) to establish tighter boundaries for extended sequences. Second, beyond theoretical DP guarantees, empirical assessments against membership inference (MIA) and data extraction attacks are needed to validate practical robustness. Finally, transitioning our heuristic-based sensitivity scoring to formal principles, such as Fisher Information or the Information Bottleneck, would further enhance theoretical rigor.}

\section*{Ethical considerations}
The six datasets used in the experiments including FOMC, Yelp, AGNews,  Amazon, Mentill and Yahoo are widely used datasets. 
Our research strictly adheres to the Code of Ethics, particularly regarding data privacy, transparency, and responsible computing practices. And there is no participant involved.

\bibliography{custom}

\newpage
\appendix

\section{Evaluation Metrics}
\label{appendix:evaluation_metrics}

To rigorously evaluate the performance of our privacy-enhanced continual learning framework, we employ three standard metrics: Backward Transfer (BWT), Last Accuracy (Last), and Average Accuracy (Avg). Let $N$ denote the total number of tasks, and $R_{k,i}$ denote the test accuracy on task $i$ after the model has been sequentially trained on tasks $1$ through $k$.

\textbf{Backward Transfer (BWT).} BWT quantifies the effect of learning new tasks on previously acquired knowledge. A negative BWT indicates catastrophic forgetting, while a positive BWT indicates forward knowledge transfer. It is defined as:
\begin{equation}
    \mathrm{BWT} = \frac{1}{N-1}\sum_{i=1}^{N-1}\bigl(R_{N,i} - R_{i,i}\bigr)
\end{equation}
where $R_{N,i}$ is the accuracy on task $i$ after training on all $N$ tasks, and $R_{i,i}$ is the accuracy on task $i$ immediately after it was first learned.

\textbf{Last Accuracy (Last).} Last accuracy measures the model's final average performance across all $N$ tasks after the entire sequential training process is complete:
\begin{equation}
    \mathrm{Last} = \frac{1}{N}\sum_{i=1}^{N}R_{N,i}
\end{equation}

\textbf{Average Accuracy (Avg).} Avg measures the mean accuracy over all evaluated tasks at every training step, providing a comprehensive view of the model's performance trajectory throughout the continual learning process:
\begin{equation}
    \mathrm{Avg} = \frac{1}{N}\sum_{k=1}^{N}\left(\frac{1}{k}\sum_{i=1}^{k}R_{k,i}\right)
\end{equation}

\section{Analysis of Hyperparameter}
\label{Anlysis of Hyperparameter}
\paragraph{Impact of Hyperparameter $\alpha$.} 
We investigate the effect of the balancing coefficient $\alpha \in [0,1]$, which controls the trade-off between model uncertainty $\text{Score}_1(t_i)$ and contextual informativeness $\text{Score}_2(t_i)$ in the privacy sensitivity score. As shown in Figure~\ref{fig3} and Figure~\ref{fig5}a, both the Avg performance and BWT reach their optimal values when $\alpha = 0.5$, suggesting that a balanced combination of the two factors leads to the best performance. Increasing $\alpha$ beyond $0.5$ (favoring uncertainty) results in degraded BWT, indicating more forgetting. Conversely, lower $\alpha$ values (favoring context) also hurt Avg, likely due to insufficient attention to uncertain tokens. These results validate the effectiveness of integrating both uncertainty and contextual features, with $\alpha = 0.5$ serving as a robust choice.

\paragraph{Impact of Hyperparameter $\lambda_{\mathrm{unlearn}}$.} 
We study the influence of the hyperparameter $\lambda_{\mathrm{unlearn}}$, which controls the strength of the unlearning regularization term. As shown in Figure~\ref{fig4}, setting $\lambda_{\mathrm{unlearn}} = 1$ achieves the best trade-off between stability and performance, yielding the highest average score ($0.535$) and the lowest forgetting as measured by BWT ($-0.093$). When $\lambda_{\mathrm{unlearn}} = 0$, i.e., no unlearning applied, BWT degrades to $-0.133$, showing more severe forgetting. As $\lambda$ increases beyond 1, the Avg score drops (e.g., $0.454$ at $\lambda=5$), likely due to over-regularization that hurts forward transfer. These results suggest that a moderate unlearning strength is beneficial for mitigating forgetting without compromising overall task performance.

\paragraph{Impact of Hyperparameter $\theta$.}
We investigate the impact of the sensitivity threshold $\theta$ in the Privacy-Aware Unlearning loss, which determines which tokens are softly suppressed during training. As shown in Figure~\ref{fig5}b, $\theta = 0.6$ achieves the best trade-off, yielding the highest Avg ($0.535$) and lowest forgetting (BWT $= -0.093$). A lower threshold (e.g., $\theta = 0.3$) suppresses too many tokens, hurting performance (Avg $= 0.514$, BWT $= -0.108$), while a higher one (e.g., $\theta = 0.9$) retains overly sensitive tokens (Avg $= 0.523$, BWT $= -0.105$). These results suggest that a moderate $\theta$ best balances utility and privacy at the token level.

\section{Per-Task Performance of Task Order} 
Table~\ref{Table 1} lists the full classification accuracy of each task for the continual learning methods that performed better in different orders. The six columns in the middle correspond to tasks 1-6, and the three columns on the far right provide BWT, Last, and Avg.

\section{Per-Task Performance of Hyperparameter $\alpha$.} 
Table~\ref{Table 2} lists the complete classification accuracy of our method on each task under different hyperparameters $\alpha$. The middle six columns correspond to tasks 1-6, and the rightmost three columns provide BWT, Last, and Avg.

\section{Per-Task Performance of Hyperparameter $\lambda_\text{unlearn}$.} 
Table~\ref{Table 3} lists the complete classification accuracy of our method on each task under different hyperparameters $\lambda_\text{unlearn}$. The middle six columns correspond to tasks 1-6, and the rightmost three columns provide BWT, Last, and Avg.

\begin{figure}
\centering
\includegraphics[width=1 \columnwidth]{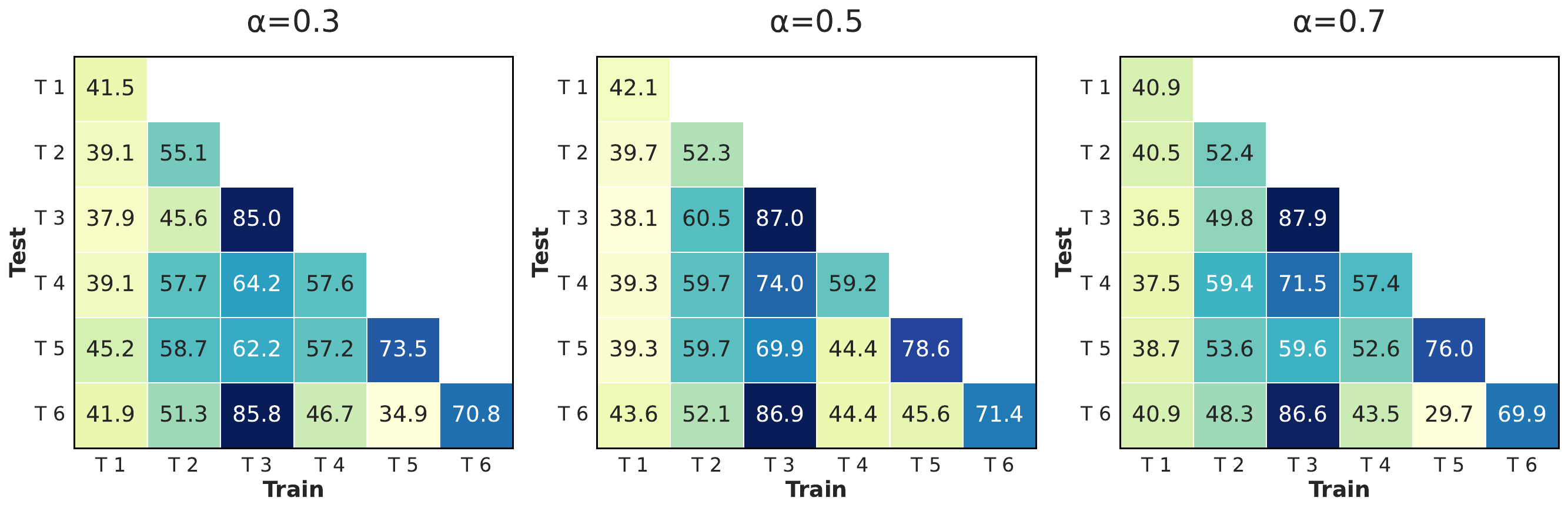} 
\caption{The performance of $\alpha$.}
\label{fig3}
\end{figure}

\begin{table*}[t] 
\small
\centering
\setlength{\tabcolsep}{8pt} 
\small
\begin{tabular}{llcccccc@{\hskip 8pt}ccc} 
\toprule 
\textbf{Task Order} & \textbf{Method} & \textbf{task1} & \textbf{task2} & \textbf{task3} & \textbf{task4} & \textbf{task5} & \textbf{task6} & \textbf{BWT} & \textbf{Last} & \textbf{Avg} \\
\midrule 
\multirow{5}{*}{order1} 
 & ER+DPSGD  & 0.438 & 0.463 & 0.536 & 0.443 & 0.152 & 0.573 & -0.136 & 0.434 & 0.486 \\
 & EWC+DPSGD & 0.444 & 0.284 & 0.471 & 0.302 & 0.160 & 0.481 & -0.137 & 0.358 & 0.393 \\
 & ER+FN     & 0.456 & 0.420 & 0.673 & 0.371 & 0.029 & 0.309 & -0.121 & 0.376 & 0.492 \\
 & EWC+FN    & 0.260 & 0.231 & 0.208 & 0.236 & 0.261 & 0.183 & -0.248 & 0.230 & 0.397 \\
 & \texttt{PeCL}      & 0.436 & 0.521 & 0.869 & 0.444 & 0.456 & 0.714 & -0.093 & 0.573 & 0.535 \\
\midrule 
\multirow{5}{*}{order2} 
 & ER+DPSGD  & 0.155 & 0.139 & 0.450 & 0.575 & 0.473 & 0.476 & -0.070 & 0.378 & 0.291 \\
 & EWC+DPSGD & 0.142 & 0.117 & 0.414 & 0.312 & 0.419 & 0.480 & -0.142 & 0.314 & 0.256 \\
 & ER+FN     & 0.148 & 0.364 & 0.470 & 0.591 & 0.468 & 0.526 & -0.071 & 0.428 & 0.319 \\
 & EWC+FN    & 0.104 & 0.123 & 0.347 & 0.502 & 0.322 & 0.480 & -0.186 & 0.313 & 0.282 \\
 & \texttt{PeCL}      & 0.276 & 0.368 & 0.541 & 0.798 & 0.618 & 0.567 & -0.057 & 0.528 & 0.446 \\
\midrule
\multirow{5}{*}{order3} 
 & ER+DPSGD  & 0.174 & 0.493 & 0.450 & 0.657 & 0.283 & 0.546 & -0.039 & 0.434 & 0.336 \\
 & EWC+DPSGD & 0.089 & 0.462 & 0.262 & 0.496 & 0.302 & 0.516 & -0.159 & 0.355 & 0.318 \\
 & ER+FN     & 0.184 & 0.493 & 0.558 & 0.601 & 0.147 & 0.521 & -0.005 & 0.417 & 0.346 \\
 & EWC+FN    & 0.126 & 0.528 & 0.427 & 0.426 & 0.179 & 0.590 & -0.158 & 0.379 & 0.341 \\
 & \texttt{PeCL}      & 0.233 & 0.555 & 0.300 & 0.522 & 0.359 & 0.598 & -0.073 & 0.428 & 0.357 \\
\bottomrule 
\end{tabular}
\caption{Per-Task performance of task order.} 
\label{Table 1} 
\end{table*}

\begin{table*}[t]
\small
\centering
\begin{tabular}{lccccccccc}
\toprule
\textbf{} & \textbf{task1} & \textbf{task2} & \textbf{task3} & \textbf{task4} & \textbf{task5} & \textbf{task6} & \textbf{BWT} & \textbf{Last} & \textbf{Avg} \\
\midrule
$\alpha$=0.2 & 0.395 & 0.485 & 0.865 & 0.423 & 0.330 & 0.705 & -0.130 & 0.534 & 0.518 \\
$\alpha$=0.3 & 0.419 & 0.513 & 0.858 & 0.467 & 0.349 & 0.708 & -0.104 & 0.552 & 0.523 \\
$\alpha$=0.5 & 0.436 & 0.521 & 0.869 & 0.444 & 0.456 & 0.714 & -0.093 & 0.573 & 0.535 \\
$\alpha$=0.7 & 0.409 & 0.483 & 0.866 & 0.435 & 0.297 & 0.699 & -0.121 & 0.532 & 0.510 \\
$\alpha$=0.8 & 0.401 & 0.427 & 0.874 & 0.378 & 0.253 & 0.699 & -0.148 & 0.505 & 0.489 \\
\bottomrule
\end{tabular}
\caption{Per-Task Performance of Hyperparameter $\alpha$.}
\label{Table 2}
\end{table*}

\begin{table*}[t]
\small
\centering
\begin{tabular}{lccccccccc}
\toprule
\textbf{$\lambda_\text{unlearn}$} & \textbf{task1} & \textbf{task2} & \textbf{task3} & \textbf{task4} & \textbf{task5} & \textbf{task6} & \textbf{BWT} & \textbf{Last} & \textbf{Avg} \\
\midrule
0  & 0.407 & 0.499 & 0.857 & 0.421 & 0.350 & 0.727 & -0.133 & 0.544 & 0.524 \\
1  & 0.436 & 0.521 & 0.869 & 0.444 & 0.456 & 0.714 & -0.093 & 0.573 & 0.535 \\
3  & 0.395 & 0.489 & 0.864 & 0.431 & 0.305 & 0.718 & -0.127 & 0.534 & 0.511 \\
5  & 0.315 & 0.429 & 0.846 & 0.396 & 0.314 & 0.706 & -0.101 & 0.501 & 0.454 \\
7  & 0.393 & 0.482 & 0.850 & 0.439 & 0.413 & 0.718 & -0.093 & 0.549 & 0.513 \\
10 & 0.440 & 0.471 & 0.850 & 0.423 & 0.333 & 0.715 & -0.059 & 0.539 & 0.476 \\
\bottomrule
\end{tabular}
\caption{Per-Task Performance of Hyperparameter $\lambda_\text{unlearn}$.}
\label{Table 3}
\end{table*}

\begin{figure}
\centering
\includegraphics[width=1 \columnwidth]{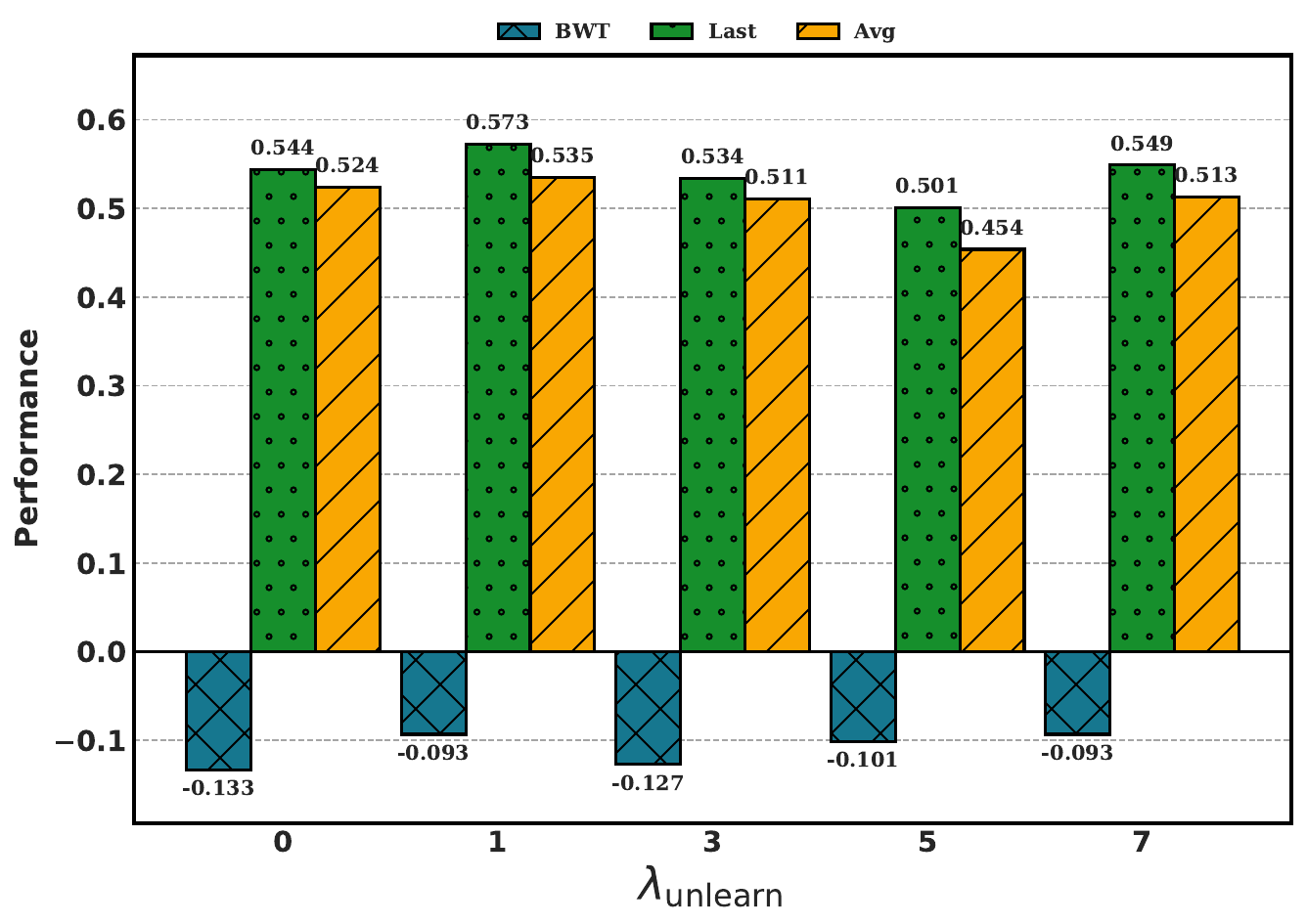} 
\caption{The Impact of $\lambda_{\mathrm{unlearn}}$.}
\label{fig4}
\vspace{-4mm}
\end{figure}

\begin{figure}
\centering
\includegraphics[width=1 \columnwidth]{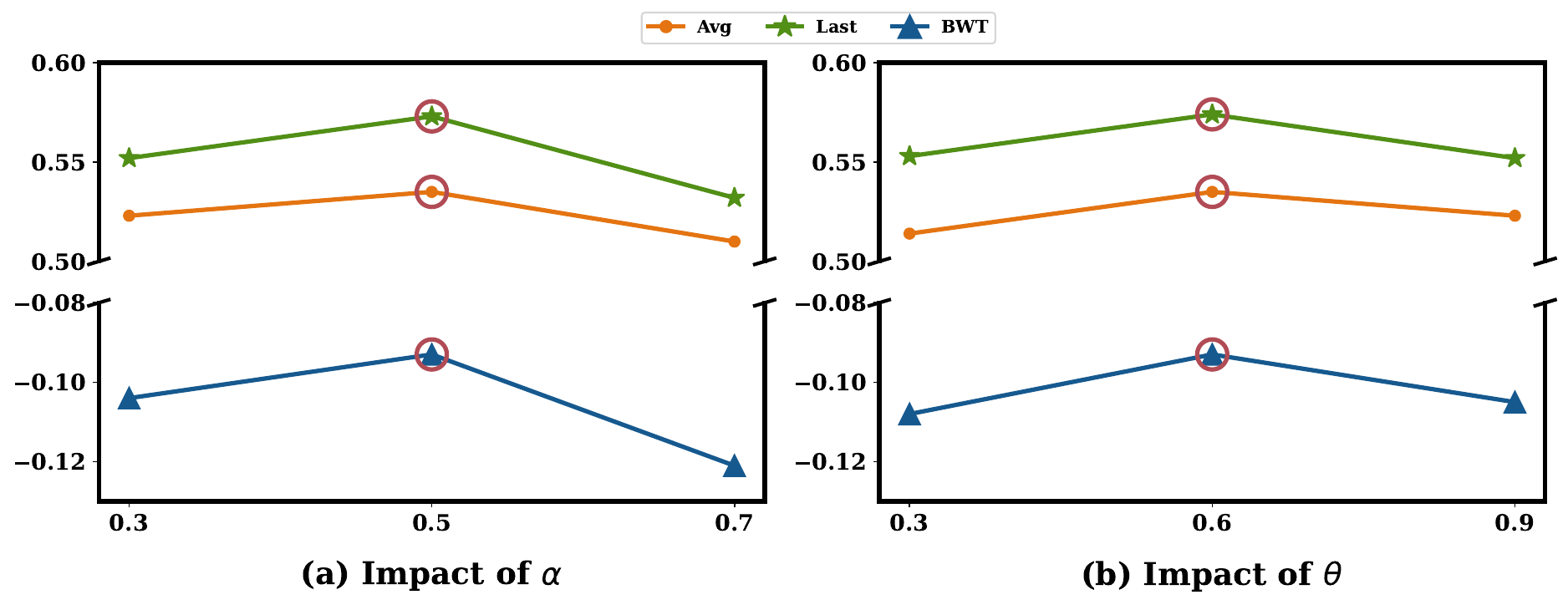} 
\caption{The Impact of Hyperparameter $\alpha$ and $\theta$.}
\label{fig5}
\vspace{-4mm}
\end{figure}

\section{Baseline Details.}
To evaluate the effectiveness of our approach, we compare it against a diverse set of representative baselines spanning different methodological paradigms.

For continual learning, we consider four established methods:
\begin{itemize}[leftmargin=*, align=left]
\item \textbf{Experience Replay (ER)}~\cite{rolnick2019experience}, a replay-based approach that stores past examples in a buffer for rehearsal. When learning a new task, ER mixes old and new data for joint training, allowing the model to revisit learned knowledge to combat forgetting.
\item \textbf{Elastic Weight Consolidation (EWC)}~\cite{kirkpatrick2017overcoming}, a regularization-based method that penalizes changes to parameters important for previous tasks. EWC identifies these key parameters using the Fisher Information Matrix and adds a penalty term to the loss function to constrain their drift while learning new tasks.
\item \textbf{Gradient Episodic Memory (GEM)}~\cite{lopez2017gradient}, which constrains gradient updates to avoid increasing the loss on previously seen tasks. Specifically, if a gradient update for the current task conflicts with the gradient directions from stored past examples, GEM projects it to a non-conflicting direction.
\item \textbf{O-LoRA}~\cite{wang2023orthogonal}, an adapter-based strategy that leverages orthogonal low-rank adaptation modules to mitigate interference across tasks. It assigns an independent LoRA module to each task and enforces an orthogonality constraint, ensuring that parameter updates for different tasks occur in their own subspaces to achieve knowledge isolation.
\end{itemize}

For privacy-preserving learning, we include two baselines:
\begin{itemize}[leftmargin=*, align=left]
\item \textbf{Differentially Private Stochastic Gradient Descent (DPSGD)}~\cite{abadi2016deep}, which injects calibrated noise into gradients to enforce differential privacy. Before each update, this method first clips the L2 norm of per-sample gradients to bound their sensitivity and then adds Gaussian noise to the aggregated gradient, providing rigorous privacy guarantees for model training.
\item \textbf{Frozen Embedding with Additive Noise (FN)}~\cite{Yu2021DifferentiallyPF}, a simple baseline that freezes the embedding layer and adds noise directly to its outputs. This strategy reduces the computational overhead of privacy reserving by fixing the large embedding layer, while the injected noise serves to perturb the input representations, offering an efficient privacy-preserving mechanism for the subsequent model layers.
\end{itemize}

In addition, we report results on two control setups to provide context for performance bounds:
\begin{itemize}[leftmargin=*, align=left]
\item \textbf{Multitask Learning (MTL)}, which trains on all tasks simultaneously and serves as an upper-bound reference.
\item \textbf{Sequential Finetuning (SeqFT)}, which learns tasks one after another sequentially without any mechanism to mitigate catastrophic forgetting.
\end{itemize}

\section{Implementation Details.} 
In our experiments, we train our models on A800-80GB GPUs. All methods employ the LLaMA-2-7B-hf \cite{touvron2023llama} for text-based tasks. For all baseline methods, we follow the implementation details and configurations from the original papers to ensure faithful reproduction. During training, we train each task for three epochs with a batch size of 32. For the hyperparameters of our approach, we set $\alpha = 0.5$, $\epsilon_{\text{upper}} = \lambda_{\text{max}} = 10$, $\epsilon_{\text{lower}} = \lambda_{\text{min}} = 1$, $\delta=1.0e-6$, $\lambda_{\text{unlearn}} = 1$, and $\theta = 0.6$. We use AdamW as the optimizer with a learning rate of $5.0e-4$ and employ a cosine learning rate scheduler.

\section{Theoretical Proof of Privacy Guarantee}
We present a formal analysis showing that our Token-level Dynamic Differential Privacy (TDP) mechanism, as defined in Section \ref{sect:Token-level Dynamic Differential Privacy}, satisfies $(\epsilon_{i},\delta)$-local differential privacy for each token $t_{i}$.

To rigorously establish this guarantee, we proceed in a structured manner. First, we recall the standard definition of $(\epsilon, \delta)$-local differential privacy, which serves as our formal privacy notion. Next, we bound the $\ell_2$ sensitivity of the clipped token embedding—a critical prerequisite for applying the Gaussian mechanism. With this sensitivity bound in hand, we invoke the well-known privacy guarantee of the Gaussian mechanism to derive the required noise scale $\sigma_i$ that ensures $(\epsilon_i, \delta)$-LDP for each token. Finally, we connect this analysis back to our dynamic sensitivity scoring framework by showing that the sensitivity score $\text{Score}(t_i)$ is properly bounded in $[0,1]$, which in turn guarantees that the derived privacy budget $\epsilon_i$ remains within a valid and interpretable range. Together, these components form a complete chain of reasoning that validates the per-token privacy guarantee of our TDP mechanism.

\begin{definition}[Local Differential Privacy]\hfill \\
A randomized mechanism $\mathcal{M}$ satisfies $(\epsilon, \delta)$-local differential privacy if for any two inputs $x$ and $x'$ differing in one entry (e.g., one token), and for any measurable set of outputs $\mathcal{S}$, it holds that:
\[
\Pr[\mathcal{M}(x) \in \mathcal{S}] \leq e^\epsilon \cdot \Pr[\mathcal{M}(x') \in \mathcal{S}] + \delta.
\]
\end{definition}

\begin{lemma}[Sensitivity of Clipped Embedding]\hfill \\
Let $e_i, e_i' \in \mathbb{R}^d$ be two embeddings corresponding to tokens $t_i$ and $t_i'$ respectively. If both are clipped such that $\|e_i\|_2 \le C$ and $\|e_i'\|_2 \le C$, then the $\ell_2$ sensitivity of the mechanism is bounded as:
\[
\Delta = \|\text{clip}(e_i, C) - \text{clip}(e_i', C)\|_2 \le 2C.
\]
\end{lemma}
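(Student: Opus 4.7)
The plan is to apply the triangle inequality directly to the clipped embeddings, exploiting the norm guarantee that the clipping operator provides by construction. First, I would recall the standard definition of the clip operator, $\mathrm{clip}(e, C) = e \cdot \min(1, C/\|e\|_2)$, which ensures $\|\mathrm{clip}(e, C)\|_2 \le C$ for every input $e \in \mathbb{R}^d$. Under the hypothesis of the lemma, $e_i$ and $e_i'$ are already inside the $\ell_2$ ball of radius $C$, so the clip acts as the identity; however, the bound $\|\mathrm{clip}(\cdot, C)\|_2 \le C$ holds uniformly regardless of whether the hypothesis is assumed or only the clip's post-condition is invoked.

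Second, I would apply the triangle inequality to the difference of the two clipped vectors, writing
\[
\|\mathrm{clip}(e_i, C) - \mathrm{clip}(e_i', C)\|_2 \le \|\mathrm{clip}(e_i, C)\|_2 + \|\mathrm{clip}(e_i', C)\|_2,
\]
and then substituting the uniform norm bound $C$ into each summand. This yields the claimed sensitivity $\Delta \le 2C$.

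The main obstacle is essentially absent: the statement is a one-line consequence of norm subadditivity together with the definition of clipping. The only point worth flagging for the reader is tightness of the bound, achieved when $\mathrm{clip}(e_i, C)$ and $\mathrm{clip}(e_i', C)$ are antipodal on the boundary of the $C$-ball, so that $2C$ is the correct global sensitivity to plug into the Gaussian mechanism. This is exactly the sensitivity assumed when calibrating $\sigma_i = C\sqrt{2\log(1.25/\delta)}/\epsilon_i$ in the subsequent per-token $(\epsilon_i,\delta)$-LDP argument; in a cleaner convention one may absorb the factor of $2$ into $C$ by redefining the clipping threshold, but as stated the $2C$ bound is the right quantity for the downstream analysis.
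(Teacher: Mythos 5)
Your proof is correct: the clip operator guarantees $\|\mathrm{clip}(\cdot,C)\|_2\le C$, and the triangle inequality then gives $\Delta\le 2C$ immediately. The paper states this lemma without any written proof, so you are supplying exactly the one-line argument the authors leave implicit; there is no alternative route to compare against. Your side observation is also worth taking seriously as a flag on the downstream theorem rather than a mere convention note: with $\Delta=2C$, the Gaussian mechanism requires $\sigma_i\ge\frac{2C\sqrt{2\log(1.25/\delta)}}{\epsilon_i}$, yet the paper defines $\sigma_i=\frac{C\sqrt{2\log(1.25/\delta)}}{\epsilon_i}$, a factor of two too small, so the subsequent claim that ``our setting for $\sigma_i$ matches this bound'' is not literally true as written (it would be consistent only if the sensitivity were taken as $C$, e.g., under a convention where one coordinate of the pair is the zero vector, or if $C$ were rescaled).
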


\begin{proposition}[Per-token Differential Privacy Guarantee] \hfill \\
Let $\tilde{e}_i = \mathcal{M}(e_i)$ denote the perturbed embedding of token $t_i$. Then $\mathcal{M}$ satisfies $(\epsilon_i, \delta)$-differential privacy for each token $t_i$.
\end{proposition}

\begin{proof}
This proposition follows directly from the Gaussian Mechanism guarantee. The mechanism adds noise $\mathcal{N}(0, \sigma_i^2 I)$ calibrated to a sensitivity of $\Delta = 2C$ (from Lemma 1). To satisfy $(\epsilon_i, \delta)$-DP, the noise standard deviation must meet the condition:
\[
\sigma_i \ge \frac{2C \cdot \sqrt{2 \log(1.25/\delta)}}{\epsilon_i}.
\]
Our setting for $\sigma_i$, as defined in Equation 5 (Section 3.1), matches this bound precisely.
\renewcommand{\qedsymbol}{}
\end{proof}

\begin{lemma}[Range of Sensitivity Score] \hfill \\
The fused sensitivity score $\text{Score}(t_i)$ defined as:
\[
\begin{split}
\text{Score}(t_i) =\\ 1 - \exp \Bigl( - \bigl( & \alpha \cdot \text{Score}_1(t_i) + (1 - \alpha) \cdot \text{Score}_2(t_i) \bigr) \Bigr)
\end{split}
\]
which is bounded in $[0,1]$ for any $\alpha \in [0,1]$, provided that $\text{Score}_1(t_i) \ge 0$ and $\text{Score}_2(t_i) \ge 0$.
\end{lemma}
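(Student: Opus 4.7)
The plan is to verify the bound by tracking how the properties of the inputs propagate through the construction of $\text{Score}(t_i)$, which is a composition of a convex combination and a monotone nonlinearity. The proof reduces to two elementary observations, so I would organize it as a forward chain of inequalities rather than a contradiction argument.

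First, I would show that the argument of the exponential, namely $z := \alpha\,\text{Score}_1(t_i) + (1-\alpha)\,\text{Score}_2(t_i)$, is non-negative. Since $\alpha \in [0,1]$, both weights $\alpha$ and $1-\alpha$ lie in $[0,1]$, and by hypothesis the two constituent scores $\text{Score}_1(t_i)$ and $\text{Score}_2(t_i)$ are non-negative. A non-negative linear combination of non-negative quantities is non-negative, so $z \ge 0$. Note that I do not need $z$ to be bounded above, which is important because $\text{Score}_1(t_i) = -\log P_\theta(t_i\mid t_{<i})$ from Eq.~(2) can be arbitrarily large.

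Second, I would invoke the standard properties of the exponential: for any $z \ge 0$, we have $\exp(-z) \in (0, 1]$, with equality at $z = 0$ and $\exp(-z) \to 0$ as $z \to \infty$. Subtracting from $1$ flips the interval, yielding $1 - \exp(-z) \in [0, 1)$. In particular, $\text{Score}(t_i) \in [0,1]$, which is the claimed bound. The lower endpoint $0$ is attained exactly when $z = 0$ (e.g., when the model assigns probability one to the token and the token has no cross-task salience), while the upper endpoint $1$ is approached but never reached; stating the bound as a closed interval $[0,1]$ is therefore valid.

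There is no genuine obstacle in this argument; the only subtle point worth being explicit about is the non-negativity of $\text{Score}_1$ and $\text{Score}_2$ as used, which follows immediately from their definitions (a negative log-probability is non-negative since $P_\theta \in [0,1]$, and $\text{Score}_2$ is a sum of products of non-negative normalized saliences with $\log(N/(1+d(t_i)))$, which is non-negative as long as $d(t_i) \le N-1$). I would briefly remark on this at the end to make the lemma's hypothesis self-consistent with the earlier definitions, rather than treating non-negativity as an unjustified assumption.
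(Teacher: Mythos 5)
Your proof is correct and follows essentially the same argument as the paper's: non-negativity of the convex combination $z$, monotonicity of $\exp(-z)$ on $[0,\infty)$ giving $1-\exp(-z)\in[0,1)$, and noting that the claimed closed interval $[0,1]$ subsumes this. The extra remark you add about when $\text{Score}_1,\text{Score}_2\ge 0$ actually hold (and the edge case $d(t_i)=N$ making the $\log$ term negative) is a reasonable supplement but does not change the proof structure.
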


\begin{proof}
Since both $\text{Score}_1$ and $\text{Score}_2$ are non-negative, their convex combination is also non-negative. Hence,
\[
\text{Score}(t_i) = 1 - \exp(-z) \in [0, 1), \quad \text{where } z \ge 0.
\]
Thus, the score lies in $[0,1)$, which we clip to 1 for completeness.
\renewcommand{\qedsymbol}{}
\end{proof}

\begin{corollary}[Dynamic Privacy Budget Range] \hfill \\
Using $\text{Score}(t_i) \in [0,1]$, the dynamically assigned privacy budget
\[
\epsilon_i = \epsilon_{\text{lower}} + (\epsilon_{\text{upper}} - \epsilon_{\text{lower}})(1 - \text{Score}(t_i))^2
\]
is always in the range $[\epsilon_{\text{lower}}, \epsilon_{\text{upper}}]$.
\end{corollary}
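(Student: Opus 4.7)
The plan is to chain elementary inequalities, using Lemma 2 as the sole substantive input. Since $\epsilon_i$ is an affine-increasing function of $(1 - \text{Score}(t_i))^2$, and the latter is a continuous non-increasing function of $\text{Score}(t_i) \in [0,1]$, the whole expression amounts to a reparametrization of the unit interval, scaled and shifted to $[\epsilon_{\text{lower}}, \epsilon_{\text{upper}}]$.

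I would proceed in three short steps. First, I would invoke Lemma 2 to conclude $\text{Score}(t_i) \in [0,1]$, from which $1 - \text{Score}(t_i) \in [0,1]$ and therefore $(1 - \text{Score}(t_i))^2 \in [0,1]$ since squaring preserves the unit interval. Second, assuming $\epsilon_{\text{upper}} \ge \epsilon_{\text{lower}}$ (as in the implementation where $\epsilon_{\text{upper}} = 10$ and $\epsilon_{\text{lower}} = 1$), multiplying by the nonnegative factor $\epsilon_{\text{upper}} - \epsilon_{\text{lower}}$ and then adding $\epsilon_{\text{lower}}$ preserves the bound and yields $\epsilon_i \in [\epsilon_{\text{lower}}, \epsilon_{\text{upper}}]$. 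Third, I would verify tightness at the two endpoints: substituting $\text{Score}(t_i) = 1$ gives $\epsilon_i = \epsilon_{\text{lower}}$, reflecting the strongest protection for maximally sensitive tokens, while $\text{Score}(t_i) = 0$ gives $\epsilon_i = \epsilon_{\text{upper}}$, preserving utility for innocuous tokens. By continuity (or by the intermediate value theorem), every value in between is attained, so the stated interval is not merely a superset but exactly the image of the map.

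No serious obstacle arises in this argument — the claim follows immediately from Lemma 2 together with the monotonicity of $x \mapsto (1-x)^2$ on $[0,1]$. The only subtlety I would flag is bookkeeping at the upper endpoint: Lemma 2 technically produces $\text{Score}(t_i) \in [0,1)$, with the value $1$ reached only through the paper's explicit clipping convention; without that clip, $\epsilon_{\text{lower}}$ would be an infimum rather than a minimum. Since the paper adopts the clip, the closed interval asserted in the corollary is correct as written, and the proof reduces to two lines of monotone algebra plus an endpoint check.
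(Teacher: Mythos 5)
Your proof is correct and takes exactly the route the paper intends: the corollary is stated without an explicit proof precisely because it follows by this two-line monotone-affine argument from Lemma 2, and your argument is the canonical way to fill in that omission. Your remark about the half-open interval $[0,1)$ versus the closed interval after clipping is an accurate and worthwhile observation that the paper itself only handles implicitly.
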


\begin{remark}[Privacy Composition Across Sequence] \hfill \\
If needed, a user-level privacy budget for a token sequence $\{t_1, \dots, t_L\}$ can be computed via advanced composition. For example, the total privacy budget $\epsilon_{\text{total}}$ for $L$ tokens under sequential composition is:
\[
\epsilon_{\text{total}} \approx \sum_{i=1}^{L} \epsilon_i + \sqrt{2L \log(1/\delta')} \cdot \max_i \epsilon_i,
\]
with final failure probability $\delta + \delta'$. Alternatively, one may use RDP accounting for tighter bounds.
\end{remark}

\end{document}